\newcommand{\subfloat}{\subfigure}
\tikzset{
    external/system call={%
    xelatex \tikzexternalcheckshellescape
    -halt-on-error -interaction=batchmode --shell-escape
    -jobname "\image" "\texsource"}}
\newtheorem{thm}{Theorem}
\newcommand{\diag}{\ensuremath{\text{diag}}}
\renewcommand{\Pr}{P}
\newcommand{\unit}[1]{\ensuremath{\, \mathrm{#1}}}
\definecolor{darkred}{rgb}{0.6,0,0}
\definecolor{darkblue}{rgb}{0,0,0.6}
\definecolor{darkgreen}{rgb}{0,0.6,0}
\definecolor{lightred}{rgb}{1,0.6,0.6}
\definecolor{lightblue}{rgb}{0.6,0.6,1}
\definecolor{lightgreen}{rgb}{0.6,1,0.6}
\title{{\large Technical Report}\\Transductive Learning for \\ Multi-Task Copula Processes}
\author{Markus Schneider\footnote{Ravensburg-Weingarten University of Applied Sciences, formerly affiliated with Australian Centre for Field Robotics \newline email: m.schneider@acfr.usyd.edu.au}
\and Fabio Ramos\footnote{Australian Centre for Field Robotics, School of Information Technologies, The University of Sydney, Australia,\newline email: f.ramos@acfr.usyd.edu.au}
}
\def\addcontentsline#1#2#3{}
\def\toptitlebar{\hrule height1pt \vskip .25in} 
\def\bottomtitlebar{\vskip .22in \hrule height1pt \vskip .3in} 
\def\section{\@startsection{section}{1}{\z@}{-0.12in}{0.02in}
             {\large\bf\raggedright}}
\def\subsection{\@startsection{subsection}{2}{\z@}{-0.10in}{0.01in}
                {\normalsize\bf\raggedright}}
\def\subsubsection{\@startsection{subsubsection}{3}{\z@}{-0.08in}{0.01in}
                {\normalsize\sc\raggedright}}
\def\paragraph{\@startsection{paragraph}{4}{\z@}{1.5ex plus
  0.5ex minus .2ex}{-1em}{\normalsize\bf}}
\def\subparagraph{\@startsection{subparagraph}{5}{\z@}{1.5ex plus
  0.5ex minus .2ex}{-1em}{\normalsize\bf}}
\def\footnoterule{\kern-3pt \hrule width 0.8in \kern 2.6pt } 
\begin{document}

\twocolumn[
  {\center\baselineskip 18pt\toptitlebar{\Large{\bf Transductive Learning for Multi-Task Copula Processes}}
  \\ -Technical Report-
  \bottomtitlebar}
  {\bf Markus Schneider} \hfill {\sc m.schneider@acfr.usyd.edu.au}%
  \par\vskip 0.03in 
  Institute for Artificial Intelligence\\
  Ravensburg-Weingarten University of Applied Sciences, Germany\\
  formerly affiliated with\\
  Australian Centre for Field Robotics\\
  The University of Sydney, Australia 
  \vskip 0.10in
  {\bf Fabio Ramos} \hfill {\sc f.ramos@acfr.usyd.edu.au}%
  \par\vskip 0.03in Australian Centre for Field Robotics,\\
  School of Information Technologies\\
  The University of Sydney, Australia 
  
  \vskip 0.3in
]
\thispagestyle{empty}
\bibliographystyle{plain}

\begin{abstract} 

We tackle the problem of multi-task learning with copula process. Multivariable prediction in spatial and spatial-temporal processes such as natural resource estimation and pollution monitoring have been typically addressed using techniques based on Gaussian processes and co-Kriging. While the Gaussian prior assumption is convenient from analytical and computational perspectives, nature is dominated by non-Gaussian likelihoods. Copula processes are an elegant and flexible solution to handle various non-Gaussian likelihoods by capturing the dependence structure of random variables with cumulative distribution functions rather than their marginals.  We show how multi-task learning for copula processes can be used to improve multivariable prediction for problems where the simple Gaussianity prior assumption does not hold. Then, we present a transductive approximation for multi-task learning and derive analytical expressions for the copula process model. The approach is evaluated and compared to other techniques in one artificial dataset and two publicly available datasets for natural resource estimation and concrete slump prediction. 

\end{abstract} 

\section{Introduction}
\label{sec:intro}

\emph{Multi-task learning} is valuable in many areas of research such as spatial-temporal modeling, environmental sciences, numerical optimization and data fusion. In these problems it is advantageous to predict more than one quantity at a time (in contrast to single-task learning) to exploit inter-dependencies. Kernel-based algorithms achieve this by the use of an appropriate \emph{multi-task} kernel. \emph{Gaussian process} (GP) \cite{Rasmussen:2006bj} based regression, as a simple and fully probabilistic model, is often the tool of choice for such problems. The GP framework supports an easy specification of a regression prior using a mean function and a kernel and delivers closed form solutions at inference stage for the predictive mean and variance. However, in many cases the assumption of a Gaussian likelihood of the data is incorrect, but accepted because of the mathematical elegance of the GP framework and the lack of alternatives with comparable performance.

\emph{Copulas}, with roots in statistics~\citep{Sklar:1996ys} are models that separate the dependence structure of two or more random variables from their marginal distribution, thus possessing the flexibility of using a different probability distribution function for each variable. Informally, they perform a transformation that maps each variable through its cumulative distribution function (cdf) to the unit interval and captures the dependence between the variables using a coupling term. This methodology can handle complex joint distributions between random variables offering tractable solutions for conditional and marginal operations. Copula distributions can be extended to stochastic processes \citep{Jaimungal:2009mi} with the help of kernels. It can be shown that the Gaussian process model is just a special case of the copula process if Gaussian marginal distributions are used. This makes copula processes an appealing replacement for GPs in cases where the Gaussian assumption is not appropriate. In this paper, we address the computational costs of copula processes, which make their application to multi-task problems difficult. We introduce a general \emph{transductive} approximation and provide analytical expressions for multi-task copula processes. 

\subsection{Related Work}

Copula processes are relatively new in machine learning. After the fundamental work on copula processes \citep{Jaimungal:2009mi}, an alternative to GARCH models for finance applications using a copula based stochastic volatility model was proposed \citep{Wilson:2010lq}. It can also be shown \citep{Wauthier:2010qa} that a heavy-tailed process, derived from copula theory, can provide robustness against outliers in the data. In geostatistics the copula process is called \emph{copula based Kriging estimator} \citep{Kazianka:2010rr} and had been introduced as a possible improvement over Gaussian random fields.

Multi-task learning is a more general form of co-Kriging where predictions for multiple quantities are made at the same time. Several different methods had been proposed for multi-task Gaussian processes: The task dependence can be introduced with shared hyper parameters \citep{Minka:1997vn} or an appropriate prior on the covariance matrix as, for example an inverse-Wishart distribution \citep{Yu:2005zr}. It is also possible to construct new kernel functions \citep{Boyle:2005ly,Boyle:2005rt} if the GP is considered as a convolution of a continuous white noise process with a smoothing kernel \citep{Higdon:1998fk}.

The \emph{Bayesian committee machine} (BCM) \citep{Tresp:2000ba} is a \emph{local} approximation for general probabilistic learning algorithms. The BCM divides the complete training data set into smaller subsets, which are trained individually and then re-combined again for predictions. It belongs to the family of transductive algorithms because the predictive distribution depends on the number and location of the query points. The algorithm is very popular for Gaussian process regression with its high demand for memory and computational time, however it is in general not straight forward how to divide the training data set. Other \emph{global} methods for Gaussian processes such as \emph{Deterministic Training Conditional} \citep{Seeger:2003kx}, \emph{Fully Independent Training Conditional} \citep{Snelson:2006fk} and \emph{Partially Independent Training Conditional} \citep{Candela:2005fk} approximation generate a sparse covariance matrix by identifying and using only a representative subset of the training data while ignoring or approximating the other samples in the set. A framework and an excellent discussion about these methods can be found in \citep{Candela:2005fk} and we will show later how our approximation can fit into it. Recently a structured noise covariance that is independent of the inputs but captures residual correlation between tasks was proposed in \citep{Rakitsch:2013}. Gaussian process regression networks \citep{wilson12icml} model related tasks with an adaptive mixture of GPs.

The novelty of this work lies in the derivation of a transductive approximation for Bayesian multi-task problems. We show how the computational complexity can be handled for a large number of tasks, which normally would grow significantly, the more variables are estimated simultaneously. Furthermore we show the practical consequences on Gaussian copula processes with a multi-task kernel obtained through process convolution.

\section{Copula Processes}

Copulas are a statistical framework to decompose a joint distribution of random variables $H(y_1, \dotsc, y_n)$ into their univariate marginal cumulative distribution functions (cdf) $F_1, \dotsc, F_n$ and a coupling term, the actual copula. Hereby each random variable gets mapped though its marginal distribution into the $[0,1]$ interval, called probability integral transformation $u_i = F_i(y_i)$. The task is then to find a copula $\mathcal{C}$, such that
\begin{align}
	H(y_1, \dotsc, y_n) &= \mathcal{C} \left( F_1(y_1), \dotsc, F_n(y_n) \right) \label{eq:sklar} \\
						&= \mathcal{C} \left(u_1, \dotsc, u_n\right). \nonumber	
\end{align}
The distribution $\mathcal{C}$ has to meet certain requirements \citep{Kolev:2006ij,Nelsen:1998qy}, but it can be proven \citep{Sklar:1996ys}, that a decomposition as in \cref{eq:sklar} exists for every joint distribution $H$.

Selecting one specific copula, it is also possible to create a huge set of different multivariate distributions by varying different marginal distribution functions. A copula with favorable analytical properties is the \emph{Gaussian copula} $C$ which can be constructed from the multivariate Gaussian cdf $\Phi_{\mu,\Gamma}$ with mean $\mu$ and covariance matrix $\Gamma$ as
\begin{align*}
	C_{\mu,\Gamma}(u) &= \Phi_{\mu,\Gamma}\left(\Phi_{\mu_1,\Gamma_{11}^{1/2}}^{-1}(u_1), \dotsc,\Phi_{\mu_n,\Gamma_{nn}^{1/2}}^{-1}(u_n) \right), 
\end{align*}
where $u = (u_1, \dotsc, u_n)$, $\Phi_{\mu_i,\Gamma_{ii}^{1/2}}$ is the $i$th univariate Gaussian cdf with $\Gamma_{ii}^{1/2}$ as the square root of the $i$-th diagonal element of matrix $\Gamma$ and $\mu_i$ as the $i$-th element of vector $\mu$. Its density $c$ can be derived as
\begin{align}
	c_{\mu,\Gamma}(u) &= \frac{\mathcal{N}_{\mu,\Gamma}\left(\Phi_{\mu_1,\Gamma_{11}^{1/2}}^{-1}(u_1),\dotsc,\Phi_{\mu_n,\Gamma_{nn}^{1/2}}^{-1}(u_n) \right)}{\prod_{i=1}^n \mathcal{N}_{\mu_i,\Gamma_{ii}^{1/2}}\left(\Phi_{\mu_i,\Gamma_{ii}^{1/2}}^{-1}\left( u_i \right)\right)}, \label{eq:copula_density}
\end{align}
where we used $\mathcal{N}$, the Gaussian density, as the derivative of $\Phi$. A \emph{Gaussian copula process} can be created \citep{Jaimungal:2009mi} if the Gaussian distribution $\Phi_{\mu,\Gamma}$ gets replaced by though a Gaussian process.

A Gaussian process \citep{Rasmussen:2006bj} $\{Z_x\}$ is a collection of Gaussian random variables indexed by $x$ with mean function $m(x)$ and a positive definite kernel $k(x,x')$. 
For a finite set of input locations $X = (x_1, \dotsc, x_n)$ and corresponding outputs $y = (y_1, \dotsc, y_n)$ we write the density of the finite dimensional subset of $\{Z_x\}$ as $h_{Z_X}(y) = p(Z_X) = \mathcal{N}_{\mu(X),K(X,X)}(y)$ and mean the multivariate normal density with kernel matrix $[K(X,X)]_{i,j} = k(x_i, x_j)$ and mean vector $[\mu(X)]_i = m(x_i)$. To ease notation we follow \citep{Wauthier:2010qa} and assume the mean function to be constant zero and $k(x,x) = \gamma^2, \forall x$ from now on. 

Given these notations we can construct a Gaussian copula process $\{Y_x\}$ with marginal distribution function $F_1, \dotsc, F_n$ as
\begin{align*}
C_{0,K(X,X)}(F_1(y_1),\dotsc,F_n(y_n))
\end{align*}
\begin{align}
	p(Y_X) &= c_{0,K(X,X)}(F_1(y_1),\dotsc,F_n(y_n)) \cdot \prod_{i=1}^n \frac{\partial F_{i}(y_i)}{\partial y_i}. \label{eq:copula_margin_density}
\end{align}
Notice that we can recover the warped Gaussian process \citep{snelson2004warped} if we use $\Phi_{0,\gamma}^{-1}(F_{i}(y_i))$ as warping functions and can also get the Gaussian process as a special case if we set $F_i = \Phi_{0,\gamma}$.

The predictive distribution $p(Y_{X^\ast} \vert Y_{X})$ for $y^\ast = (y_1^\ast,\dotsc,y_m\ast)$ at $X^\ast = (X_1^\ast,\dotsc,X_m^\ast)$ with margins $F_{1}^\ast,\dotsc,F_{m}^\ast$ can be obtained as
\begin{align}
& p(Y_{X^\ast} \vert Y_{X}) = c_{\hat{\mu},\hat{\Gamma}}(F_1^\ast(y^\ast_1),\dotsc,F^\ast_m(y^\ast_m)) \cdot \prod_{i=1}^m \frac{\partial F^\ast_{i}(y^\ast_i)}{\partial y^\ast_i} \label{eq:gcpdensity}\\
\hat{\mu} 		&= K(X,X^\ast)^T K(X,X)^{-1} w \nonumber \\ 
\hat{\Gamma}	&= K(X^\ast,X^\ast) - K(X,X^\ast)^T K(X,X)^{-1} K(X,X^\ast)\nonumber 
\end{align}
and $w_i = \Phi_{0,\gamma}^{-1}(F_{i}(y_i))$.

\subsection{Making Predictions}

In the inference step we normally want to provide an estimate, a single number with uncertainty bounds rather than a full predictive distribution. In machine learning this is often the mean and the variance. This can be problematic for the Gaussian copula process, since (depending on the marginal distribution function) these quantities may not exist. Furthermore, since the predictive distribution can be asymmetric, the variance may not be informative enough. Hence we suggest to calculate the \emph{median} and the \emph{quantiles} of the predictive distribution and provide the expressions next. In order to get a quantile $Q(p)$ at input $X^\ast_i$ we use
\begin{align*}
	Q(p) = F_{i}^{*-1}\left(\Phi_{\mu_i,\Gamma_{ii}^{1/2}}\left( \Phi^{-1}_{\hat{\mu}_i,\hat{\Gamma}_{ii}}(p)   \right)\right),
\end{align*}
where $\hat{\mu}_i$, $\hat{\Gamma}_{ii}$ are the $i$-th entry from $\hat{\mu}$, $\hat{\Gamma}$ as in \cref{eq:gcpdensity} and $F_{i}^{*-1}$ is the quantile distribution of the corresponding cdf\footnote{Notice, that the dependence on $X^\ast_i$, $X$ and $y$ is introduced implicitly by these variables.}. The median as the $0.5$ quantile is then, using the equation above, given by 
\begin{align*}
	Q(0.5) &= F_{i}^{*-1}\left(\Phi_{\mu_i,\Gamma_{ii}^{1/2}}\left( \Phi^{-1}_{\hat{\mu}_i,\hat{\Gamma}_{ii}}(0.5)   \right)\right)\\
	& = F_{i}^{*-1}\left(\Phi_{\mu_i,\Gamma_{ii}^{1/2}}\left( \hat{\mu}_i   \right)\right),
\end{align*}
since the $0.5$ quantile, the median, of a Gaussian distribution is its mean.

\subsection{Multi-Task Copula Processes}
\label{sec:multi-task}

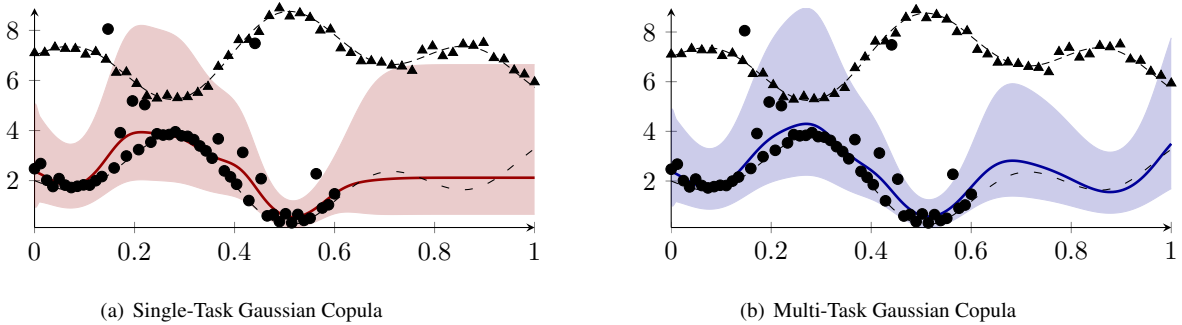
\begin{figure*}[t] 
	\centering
	\subfloat[Single-Task Gaussian Copula]{\label{fig:toy_multi_task_a}
	\begin{tikzpicture}
		\begin{axis}[width=.48\textwidth, height=4.5cm, axis x line=bottom, axis y line=left,]
		\def\filename{mtcopula_toy_stpredict.txt}
		
		\pgfplotstableread{\filename}\ubtable
		\pgfplotstablecreatecol[create col/expr={\thisrow{3}}]{bound}\ubtable
		\pgfplotstablesort[sort cmp=float <]{\ubtable}{\ubtable}
		
		\pgfplotstableread{\filename}\table
		\pgfplotstablecreatecol[create col/expr={\thisrow{2}}]{bound}\table
		\pgfplotstablesort[sort cmp=float >]{\table}{\table}
		
		\pgfplotstablevertcat{\table}{\ubtable}
		\addplot[smooth,darkred,line width=1pt] table[x index=0,y index=1,] {\filename};		
		\addplot[smooth,darkred,fill=darkred,opacity=0.2] table[x index=0, y = bound]{\table};
		\addplot gnuplot[loosely dashed, no marks,black,domain=0:1,samples=100]{-sin(16*x)+sin(8*x)+2};		
		\addplot gnuplot[densely dashed, no marks,black,domain=0:1,samples=100]{+sin(16*x)-sin(8*x)+7};			
		\addplot[only marks,mark=*,mark options={fill=black}] table[x index=0, y index=2] {mtcopula_toy_dataset.txt};	
		\addplot[only marks,mark=triangle*, mark options={fill=black}] table[x index=1, y index=3] {mtcopula_toy_dataset.txt};			
				
		\end{axis}
	\end{tikzpicture}
	}
	\subfloat[Multi-Task Gaussian Copula]{\label{fig:toy_multi_task_b}
	\begin{tikzpicture}
		\begin{axis}[width=.48\textwidth, height=4.5cm, axis x line=bottom, axis y line=left,]
		\def\filename{mtcopula_toy_mtpredict.txt}
		
		\pgfplotstableread{\filename}\ubtable
		\pgfplotstablecreatecol[create col/expr={\thisrow{3}}]{bound}\ubtable
		\pgfplotstablesort[sort cmp=float <]{\ubtable}{\ubtable}
		
		\pgfplotstableread{\filename}\table
		\pgfplotstablecreatecol[create col/expr={\thisrow{2}}]{bound}\table
		\pgfplotstablesort[sort cmp=float >]{\table}{\table}
		
		\pgfplotstablevertcat{\table}{\ubtable}
	
		\addplot[smooth,darkblue,line width=1pt] table[x index=0,y index=1,] {\filename};	
		\addplot[smooth,darkblue,fill=darkblue,opacity=0.2] table[x index=0, y = bound]{\table};
		\addplot gnuplot[loosely dashed, no marks,black,domain=0:1,samples=100]{-sin(16*x)+sin(8*x)+2};		
		\addplot gnuplot[densely dashed,no marks,black,domain=0:1,samples=100]{+sin(16*x)-sin(8*x)+7};		
		\addplot[only marks,mark=*,mark options={fill=black}] table[x index=0, y index=2] {mtcopula_toy_dataset.txt};	
		\addplot[only marks,mark=triangle*, mark options={fill=black}] table[x index=1, y index=3] {mtcopula_toy_dataset.txt};	

		\end{axis}
	\end{tikzpicture}
	}
	\caption[Toy Example]{Comparison between single-task (left) and multi-task (right) Gaussian copula processes for an artificial dataset. The dashed lines representing the (latent) true functions with noisy samples marked as circles and triangles. The observations for the primary task (circles) are corrupted noisy versions of the true function. The noise was sampled from a generalized extreme value distribution. For the observations of the co-task (triangles) we simply added a zero mean Gaussian noise. The solid lines are the predictive median, whereas the shaded region is the area in between the $0.05$ and $0.95$ quantiles.}
	\label{fig:toy_multi_task}
\end{figure*}

In contrast to single-task learning, where the objective is to estimate a scalar valued quantity, the aim of multi-task learning is to estimate more than one variable at a time. The applications of multi-task learning are broad, but very often the estimation of a primary variable of interest can be improved if we take other correlated variables (secondary or co-variables) into account. \cref{fig:toy_multi_task} illustrates this concept. A typical situation is the case where only a small sample set of the primary variable is available, but a larger data set for the secondary variables. This can happen, for example if the primary variable is much more difficult or expensive to estimation and occurs frequently in geology or environmental setting.

The challenge to extend a kernel-based algorithm (such as GPs or the Gaussian copula process) to a multi-task version gets reduced to the problem of defining an appropriate multi-task kernel. Some multi-task kernels are inspired from co-Kriging theory \citep{Wackernagel:2003uq} as, for example the \emph{intrinsic correlation model} (ICM) and \emph{linear model of corregionalization} (LMC). Others are more recent such as the convolutional kernel \citep{Higdon:1998fk}. Given kernels for the individual tasks, the convolutional kernel attempts to find a kernel for all cross-task dependencies such that the resulting kernel matrix is still positive definite. For example, the cross-task kernel between a \emph{squared exponential}, $k(r)=\exp(-r^2/l_{SE}^2)$, and a \emph{Mat\'{e}rn} kernel with smoothness $\nu=3/2$, $k(r)=(1+\sqrt{3}r/l_M)\exp(-\sqrt{3}r/l_M)$ is
\begin{align*}
k^{dq}(x,x') &= k^{dq}(r) = \sqrt{\lambda} \left( \frac{\pi}{2} \right)^{1/4} e^{\lambda^2} 
\Bigg[ 2 \cosh \left( \frac{\sqrt{3}r}{l_M} \right) \\
&- e^{\frac{\sqrt{3}r}{l_M}} \text{erf} \left( \lambda + \frac{r}{l_{SE}} \right) 
- e^{\frac{\sqrt{3}r}{l_M}} \text{erf} \left( \lambda - \frac{r}{l_{SE}} \right) 
\Bigg]
\end{align*}
\noindent where $\lambda = \frac{\sqrt{3}}{2} \frac{l_{SE}}{l_M}$, 
$\text{erf}(x) = \frac{2}{\sqrt{\pi}} \int_0^x e^{-z^2} \, \mathrm{d} z$, $r = ||x-x'||^2$ and $l_{SE}$, $l_M$ are the length scales for the squared exponential (task $d$) and the Mat\'{e}rn (task $q$) respectively \citep{Melkumyan:2011pd}.

We do not have to change anything on the equations for the copula processes, but use a multi-task kernel instead of the ordinary kernel function. We can then merge the inputs and outputs from different tasks into the sets $X$ and $y$ respectively and do the same for test inputs $X^\ast$ and outputs $y^\ast$.

\subsection{Parameter Estimation}

The copula process model is not entirely parameter free as the kernel and the univariate marginal distributions are usually parameterized in some way. We will denote the set of all these parameters with $\theta$. One of the advantages of Bayesian methods is the ability to estimate such parameters from data rather than using cross-validation. We follow the standard procedure and using a maximum likelihood approach. More specifically, we are going to minimize the negative log-likelihood 
\begin{align*}
	L(\theta) =  -\log( P(Y_X; \theta) ),
\end{align*}
where $P(Y_X; \theta)$ is as in \cref{eq:copula_margin_density}, but we now explicitly annotate the dependence on $\theta$. A common approach to minimize this non-convex function is to use conjugate gradient optimization with random restarts or simulated annealing. This requires numerous evaluations of $L$. As it can be seen from \cref{eq:copula_margin_density}, the biggest computational costs are introduced by $\mathcal{N}_{0,K(X,X)}(y)$ from the numerator in \cref{eq:copula_density} which is given by
\begin{align*}
\frac{1}{(2\pi)^{n/2} \vert K(X,X) \vert} \exp \left( -\frac{1}{2} y^T K(X,X)^{-1} y \right),
\end{align*} 
where costs are dominated by the inversion of $K(X,X)$ with $\mathcal{O}(n^3)$. Especially for multi-task problems this becomes rapidly troublesome. Recall, that $n$ is the number of elements in $X$ and $y$ and that, in the case of multi-task learning, we collected the data from all tasks in these two sets. If we assume that each of our $t$ tasks has roughly the same number of training examples, say $\bar{n}$, then the complexity is $\mathcal{O}((t\bar{n})^3)$. This makes multi-task learning computationally very difficult and we introduce approximation scheme next to attack this problem.

\section{Transductive Multi-Task Learning}
\label{sec:transductive_mt}
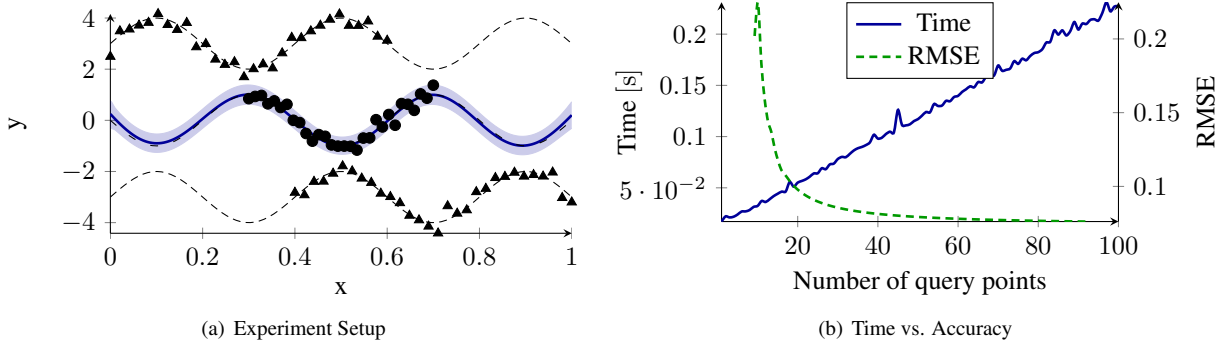
\begin{figure*}[t] 
	\centering
	\subfloat[Experiment Setup]{\label{fig:toy_transductive_training}
	\begin{tikzpicture}
		\begin{axis}[width=.45\textwidth, height=4.5cm, axis x line=bottom, axis y line=left,xlabel=x,ylabel=y]	
		\def\filename{demo_transductive_toy_trans_pred.txt}
		
		\pgfplotstableread{\filename}\ubtable
		\pgfplotstablecreatecol[create col/expr={\thisrow{3}}]{bound}\ubtable
		\pgfplotstablesort[sort cmp=float <]{\ubtable}{\ubtable}
		
		\pgfplotstableread{\filename}\table
		\pgfplotstablecreatecol[create col/expr={\thisrow{2}}]{bound}\table
		\pgfplotstablesort[sort cmp=float >]{\table}{\table}
		
		\pgfplotstablevertcat{\table}{\ubtable}		
		
		\addplot[smooth,darkblue,line width=1pt] table[x index=0,y index=1,] {\filename};	
		\addplot[smooth,darkblue,fill=darkblue,opacity=0.2] table[x index=0, y = bound]{\table};
		\addplot gnuplot[loosely dashed, no marks,black,domain=0:1,samples=100]{-sin(5*pi*x)};		
		\addplot gnuplot[densely dashed,no marks,black,domain=0:1,samples=100]{sin(5*pi*x)+3};
		\addplot gnuplot[densely dashed,no marks,black,domain=0:1,samples=100]{sin(5*pi*x)-3};		
		\addplot[only marks,mark=*,mark options={fill=black}] table[x index=0, y index=3] {demo_transductive_toy_training_data.txt};	
		\addplot[only marks,mark=triangle*, mark options={fill=black}] table[x index=1, y index=4] {demo_transductive_toy_training_data.txt};	
		\addplot[only marks,mark=triangle*, mark options={fill=black}] table[x index=2, y index=5] {demo_transductive_toy_training_data.txt};			
		\end{axis}
	\end{tikzpicture}
	}
	\subfloat[Time vs. Accuracy]{\label{fig:toy_time_acc}
	\begin{tikzpicture}
	\begin{axis}[width=.40\textwidth, height=4.5cm, axis x line=bottom, axis y line=left,xlabel=Number of query points,ylabel=Time,legend style={at={(0.5,1)}, anchor=north},y unit = s,]
		\addplot[smooth,darkblue,line width=1pt] table[x index=0,y index=2,] {demo_transductive_toy_time_accuracy.txt}; \addlegendentry{Time}
		\addplot[densely dashed,smooth,darkgreen,line width=1pt]coordinates {(1,0.13739729)}; 
		\addlegendentry{RMSE}
	\end{axis}
	
	\begin{axis}[width=.40\textwidth, height=4.5cm, axis x line=none, axis y line=right,ylabel=RMSE,axis y line*=right,]
		\addplot[densely dashed,smooth,darkgreen,line width=1pt] table[x index=0,y index=4,] {demo_transductive_toy_time_accuracy.txt}; 
	\end{axis}	
	\end{tikzpicture}
	}
	\caption{a) shows the setup of the toy example. The primary variable to estimate is the black dashed line in the middle and the black circles are the noisy samples. Above and below are the co-variables and their samples (black triangles). The copula median estimate for the primary variable is the blue solid line in the middle nearly on top of the primary variable. The shaded region denotes the area between the $0.05$ and $0.95$ quantiles. b) shows how the prediction time and the estimation error behaves if the number of query points gets enlarged. It can be seen, that the predictions getting better with more query points. This is a property of transductive algorithms.}
	\label{fig:toy_example}
\end{figure*}

As mentioned in the previous section, many learning algorithms, such as the ones we used in this work, can only handle a limited number training data efficiently. This makes it even harder to apply to multi-task problems, since each task carries additional data. In Kriging, Gaussian processes and Gaussian copula processes we have to do a covariance (kernel) matrix inversion, which scales cubic with the number of training data. In this section we present a transductive approach for multi-task algorithms inspired by the Bayesian committee machine \cite{Tresp:2000ba}.

Informally speaking, we are going to perform multi-task learning with the primary variable of interest and each of the secondary variables individually and combine the results at the end. This will reduce the computational costs to $\mathcal{O}(t\bar{n}^3)$.
\begin{thm}\label{thm:transductive}
Let $Y_{X_1},\dotsc,Y_{X_t}$ be the random variables modeling each of the $t$ tasks and we assume without the loss of generality that we want to make predictions for the primary variable $Y_{X^\ast_1}$ for task $1$. Using the assumption that any two $Y_{X_i}, Y_{X_j}$ with $i \neq j \in \{2\dotsc,t\}$ are conditionally independent given $Y_{X_1}$ and $Y_{X^\ast_1}$, we can approximate the full multi-task model as
\begin{align*}
\Pr(Y_{X^\ast_1} \vert Y_{X_1}, \dotsc, Y_{X_t}) \approxeq \frac{\prod_{i=2}^t \Pr(Y_{X^\ast_1} \vert  Y_{X_1}, Y_{X_i}) }{\Pr(Y_{X^\ast_1} \vert Y_{X_1})^{t-2}} \cdot \text{const}.
\end{align*}
\end{thm}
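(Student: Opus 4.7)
The statement is a Bayesian Committee Machine style identity, and the plan is to derive it by two applications of Bayes' rule together with the conditional independence hypothesis; the ``$\text{const}$'' factor absorbs everything that does not depend on $Y_{X_1^\ast}$.

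First I would apply Bayes' rule to condition on $Y_{X_1}$ throughout, writing
\begin{align*}
\Pr(Y_{X_1^\ast} \vert Y_{X_1}, Y_{X_2}, \dotsc, Y_{X_t})
 = \frac{\Pr(Y_{X_2}, \dotsc, Y_{X_t} \vert Y_{X_1}, Y_{X_1^\ast}) \, \Pr(Y_{X_1^\ast} \vert Y_{X_1})}{\Pr(Y_{X_2}, \dotsc, Y_{X_t} \vert Y_{X_1})}.
\end{align*}
The denominator does not depend on $Y_{X_1^\ast}$ and therefore can be pushed into the constant.

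Next I would invoke the stated conditional independence: given $Y_{X_1}$ and $Y_{X_1^\ast}$, the tasks $Y_{X_2}, \dotsc, Y_{X_t}$ factorize, so
\begin{align*}
\Pr(Y_{X_2}, \dotsc, Y_{X_t} \vert Y_{X_1}, Y_{X_1^\ast}) = \prod_{i=2}^{t} \Pr(Y_{X_i} \vert Y_{X_1}, Y_{X_1^\ast}).
\end{align*}
Then I would rewrite each individual factor by a second application of Bayes' rule (again conditioning on $Y_{X_1}$ throughout):
\begin{align*}
\Pr(Y_{X_i} \vert Y_{X_1}, Y_{X_1^\ast}) = \frac{\Pr(Y_{X_1^\ast} \vert Y_{X_1}, Y_{X_i}) \, \Pr(Y_{X_i} \vert Y_{X_1})}{\Pr(Y_{X_1^\ast} \vert Y_{X_1})}.
\end{align*}
The factor $\Pr(Y_{X_i} \vert Y_{X_1})$ does not depend on $Y_{X_1^\ast}$ and so is absorbed into the constant.

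Finally I would simply collect exponents. Multiplying over $i = 2, \dotsc, t$ yields a $\Pr(Y_{X_1^\ast} \vert Y_{X_1})^{-(t-1)}$ contribution; combined with the single factor $\Pr(Y_{X_1^\ast} \vert Y_{X_1})^{+1}$ remaining from the first Bayes step, the exponent on $\Pr(Y_{X_1^\ast} \vert Y_{X_1})$ in the denominator is exactly $t-2$, matching the claim. There is no real obstacle here: the only thing to be careful about is the bookkeeping of which terms are functions of $Y_{X_1^\ast}$ (and hence contribute to the posterior shape) versus which are marginals over $Y_{X_i}$ or $Y_{X_2}, \dotsc, Y_{X_t}$ (and hence get absorbed into the normalizing $\text{const}$). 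Because the identity is stated up to a proportionality constant in $Y_{X_1^\ast}$, the conclusion follows directly.
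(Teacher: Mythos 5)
Your proof is correct and follows essentially the same route as the paper's: Bayes' rule to move $Y_{X_1^\ast}$ into the conditioning set, the conditional-independence factorization over the co-tasks, and a second application of Bayes' rule to each pairwise factor, with the exponent $t-2$ arising from the same bookkeeping. The only difference is presentational --- by conditioning on $Y_{X_1}$ throughout you avoid the paper's multiply-and-divide by $\Pr(Y_{X_1}\vert Y_{X_1^\ast})^{t-1}$, but the decomposition and the resulting identity are identical.
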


\begin{proof}
With the help of the Bayes' rule and chain rule we write 
\begin{align*}
\Pr(&Y_{X^\ast_1} \vert Y_{X_1}, \dotsc, Y_{X_t}) \\ &= \frac{\Pr(Y_{X^\ast_1}) \Pr(Y_{X_1}\vert Y_{X^\ast_1}) \cdots \Pr( Y_{X_t} \vert Y_{X_1}, \dotsc, Y_{X_{t-1}},Y_{X^\ast_1})}{\Pr(Y_{X_1}, \dotsc, Y_{X_t})},
\end{align*}
which is true in general. In the next step we are making our independence assumptions which yields
\begin{align*}
\Pr(& Y_{X^\ast_1}  \vert Y_{X_1}, \dotsc, Y_{X_t}) \\ 
&\approxeq \frac{\Pr(Y_{X^\ast_1}) \Pr(Y_{X_1},  \vert Y_{X^\ast_1}) \prod_{i=2}^t\Pr( Y_{X_i} \vert Y_{X_1}  ,Y_{X^\ast_1})}{\Pr(Y_{X_1}, \dotsc, Y_{X_t})} \\
&= \frac{\Pr(Y_{X^\ast_1}) \Pr(Y_{X_1}\vert Y_{X^\ast_1})^{t-1} \prod_{i=2}^t\Pr(Y_{X_i} \vert Y_{X_1},Y_{X^\ast_1})}{\Pr(Y_{X_1}, \dotsc, Y_{X_t})\Pr(Y_{X_1}\vert Y_{X^\ast_1})^{t-2}}\\
&= \frac{\Pr(Y_{X^\ast_1}) \prod_{i=2}^t\Pr(Y_{X_i}, Y_{X_1}\vert Y_{X^\ast_1})}{\Pr(Y_{X_1}, \dotsc, Y_{X_t})\Pr(Y_{X_1}\vert Y_{X^\ast_1})^{t-2}}\\
&= \frac{\prod_{i=2}^t \Pr(Y_{X^\ast_1} \vert  Y_{X_1}, Y_{X_i}) }{\Pr(Y_{X^\ast_1} \vert Y_{X_1})^{t-2}} \cdot \text{const},
\end{align*}
where we multiplied in the second step the whole equation with $\Pr(Y_{X_1}   \vert Y_{X^\ast_1})^{t-1}/\Pr(Y_{X_1}   \vert Y_{X^\ast_1})^{t-1}$ and used Bayes' rule again in the last step.
\end{proof}
Notice, that with this approximation, we never have to learn a model for more than two tasks at a time, which gives the computational speedup and also provides a way to easily distribute the computation to several machines.

If we apply the approximation to Gaussian copula processes, the numerator and denominator are conditional Gaussian copula densities of the form as in \cref{eq:gcpdensity}. This is advantageous since we only have to deal with products and quotients of Gaussian distributions introduced by \cref{eq:copula_density}, for which analytical solutions are available. More precisely, the approximate predictive distribution for the Gaussian copula process is then
\begin{align*}
\Pr(Y_{X^\ast_1} \vert Y_{X_1}, \dotsc, Y_{X_t}) & \approxeq c_{\dot{\mu},\dot{\Gamma}}(F_1^\ast(y^\ast_1),\dotsc,F^\ast_m(y^\ast_m)) \\
& \cdot \prod_{i=1}^m \frac{\partial F^\ast_{i}(y^\ast_i)}{\partial y^\ast_i},
\end{align*}
where $\dot{\mu}$ and $\dot{\Gamma}$ can be obtained from 
\begin{align}
\mathcal{N}_{\dot{\mu},\dot{\Gamma}} &= \prod_{t=2}^t \frac{\mathcal{N}_{\hat{\mu}_{1,i},\hat{\Gamma}_{1,i}}}{\mathcal{N}_{\hat{\mu}_{1,i},\diag(\hat{\Gamma}_{1,i})}}
\left( \frac{\mathcal{N}_{\hat{\mu}_1,\diag(\hat{\Gamma}_1)}}{\mathcal{N}_{\hat{\mu}_1,\hat{\Gamma}_1}} \right)^{t-2},\label{eq:trans_gauss_prod_quot}
\end{align}
and $\hat{\mu}_1,\hat{\Gamma}_1$, $\hat{\mu}_{1,i},\hat{\Gamma}_{1,i}$ are defined as in \cref{eq:gcpdensity} if we calculate the predictive distribution for $\Pr(Y_{X^\ast_1} \vert Y_{X_1})$ and $\Pr(Y_{X^\ast_1} \vert  Y_{X_1}, Y_{X_i})$ respectively. For example $\hat{\Gamma}_{1,i}$ would be obtained as
\begin{align*}
\hat{\Gamma}_{1,i} &= K(X^\ast_1,X^\ast_1) - K([X_1, X_i],X^\ast_1)^T \\
& \cdot K([X_1, X_i],[X_1, X_i])^{-1} K([X_1, X_i],X^\ast_1),
\end{align*}
which is also the main contributor to the complexity of $\mathcal{O}(8(t-1)\bar{n}^3) = \mathcal{O}(t\bar{n}^3)$.
Eq. \ref{eq:trans_gauss_prod_quot} above can be further reduced with the rules for products and quotients of Gaussian distributions which can be found in standard textbooks and in \cite{Tresp:2000ba}, but we omit it here due to paucity of space. Please note also that all $y^\ast_1,\dotsc,y^\ast_m$ are from the primary task and so are their univariate marginal distributions $F^\ast_1,\dotsc,F^\ast_m$.

If we follow \citep{Candela:2005fk}, we can also see our transductive approximation as an inducing approach, where the so called \emph{inducing variables} are defined to be $Y_{X_1}$ and $Y_{X^\ast_1}$. Using this point of view, it may be easier to see that the quality of prediction can depend on the number of query points $Y_{X^\ast_1}$ used. As in general for transductive algorithms, the prediction becomes better, the more query points are used (see \cref{fig:toy_example}). As a consequence, even if only a few estimations are needed, one should include artificial dummy test inputs in the prediction step and then discard them. In most cases this is not a serious problem, since the training/parameter estimation phase is the one, which takes an order of magnitude more time than the prediction phase.

\section{Experiments}
\label{sec:experiments}

The experiments in this section share the following setup: In the first step, we train a standard Gaussian copula process for each task individually and determine which combination of kernel and marginal function is the best for each specific task. This procedure yields good results in general. We use either the squared exponential or the Mat\'{e}rn kernel and take the convolutional multi-task kernel approach to calculate the cross-kernels. For the marginal distribution functions we choose one from the following set: The \emph{normal, log-normal, exponential, generalized extreme value (GEV), gamma, t}-distribution or a \emph{parzen window} estimator. In the second step we use maximum likelihood on the full multi-task Gaussian copula process to optimize the parameters for the kernel and the marginal distributions all-together. 

\subsection{A Toy Dataset}

The experiment is done on an artificial dataset to demonstrate the methodology. The dataset consists of three highly dependent tasks (see \cref{fig:toy_transductive_training}), where only noisy samples of the true (latent) functions are available to the algorithm. The transductive Gaussian copula process is used as described in \label{thm:transductive} and can be summarized as follows: First, we learn two distinct multi-task copula processes, the primary variable (middle) together with each of the secondary variables (top and bottom). In the second step we merge the results again to obtain a single multi-task copula process. In this toy example, the full model is omitted, since it is nearly identical to the approximation.

As mentioned in the previous section, the performance of transductive algorithms can depend on the number and location of the query inputs. \cref{fig:toy_time_acc} illustrates this property. We slowly increasing the number of query inputs, starting with only one and calculate the root-mean-squared-error (RMSE) and the time needed for the prediction averaged over 20 trials. The prediction time increases as expected with the number of query points while the prediction accuracy improves. As we will see, the choice of query points does not have a significant impact on the other two datasets. 

\begin{table}
	\centering
	\begin{tabularx}{1\linewidth}{ l c c }
		\hline
					& 	Cd	[Ni, Zn]			& 	Cu	[Pb,Ni,Zn]	\\
		\hline
		StGCP	&	$0.56\pm 0.07$	&	$14.43\pm 2.62$			\\
		MtGCP	&	$\mathbf{0.42\pm 0.06}$	&	$\mathbf{6.57\pm 1.04}$	\\
		TransGCP 	& 	$0.44\pm 0.09$			& 	$6.96\pm 1.43$\\ \hline
		CK			&	$0.51$			&	$7.8$			\\
		StGP		&	$0.57$			&	$15.8$			\\
		MtGP		&	$0.44$			&	$7.5$			\\
		D200		&	$\sim0.46$			&	-				\\
		F359		&	$\sim0.47$			&	-				\\
		P200		&	$\sim0.45$			&	-				\\
	\end{tabularx}	
	\caption[Jura Comparison]{A comparison between various algorithms for the elements Cd and Cu. The table shows the mean of the absolute error and corresponding standard deviation. The first three rows are our implementation of the single-task Gaussian copula process (StGCP), multi-task Gaussian copula process (MtGCP) and the transductive approximation (TransGCP). The other rows are the numbers for the single-task Gaussian process (StGP) and multi-task Gaussian process (MtGP) from \citep{Alvarez:2008ve} and co-Kriging (CK) from \citep{Bivand:2008ly}. The last three entries are from \citep[Fig. 8]{Alvarez:2011fj} and are the DTC, FITC and PITC approximations with 200, 359 and 200 inducing points respectively.}
	\label{fig:jura_comp}
\end{table}

\begin{table}
	\centering
	\begin{tabularx}{1\columnwidth}{ l l l}
		\hline
		 			& 	Opt. Time 	&	Time/Eval.\\ \hline
		MtGCP Cd [Ni, Zn]	&	$898$ s						&	$0.517$ s \\
		TransGCP Cd [Ni, Zn]	&	$429$ s						&	$0.363$ s\\
		MtGCP Cu	[Pb,Ni,Zn]	&	$1046$ s					&	$0.625$ s\\
		TransGCP Cu [Pb,Ni,Zn]	&	$621$ s						&	$0.409$ s \\ \hline
		D200 Cd [Ni, Zn]		&	$185$ s						&	-		\\
		F359 Cd [Ni, Zn]		&	$691$ s						&	-	\\
		P200 Cd [Ni, Zn]		&	$385$ s						&	-		
	\end{tabularx}
	\caption{The table shows the comparison between the full multi-task copula process (MtGCP) and the transductive approximation (TransGCP) for Cadmium (Cd) and Copper (Cu). The first column indicates the algorithm followed by the primary variable and the secondary variables in brackets. The second column shows the total time needed for the marginal likelihood optimization (Opt. Time) and the last column show the time needed per marginal likelihood function evaluation (Time/Eval). The last three entries are from \citep[Table 8]{Alvarez:2011fj} and the algorithm did not run on the same machine as our results. We just provide the figures for completeness and a rough baseline.}
	\label{fig:jura_cmp_trans_full}
\end{table}

\subsection{The Jura Dataset}

The second experiment is performed on the Jura dataset which contains 359 samples of two categorical variables (land uses and rock type) and the concentration of seven chemical elements (Cadmium, Cobalt, Chromium, Copper, Nickel, Lead and Zinc) from a 14.5 \unit{km^2} region of the Swiss Jura. As in the previous experiment the primary variable has fewer samples than the secondary variables. This can occur in real datasets if, for example, the concentration of one element is harder or more expensive to estimate or the dataset contains missing values. For comparison reasons we use exactly the same setup as in  \citep{Bivand:2008ly,Alvarez:2008ve}: 
\begin{itemize}
\item the dataset is divided into 259 training samples and 100 test samples for the primary variable, but all 359 samples are used for the secondary variables; 
\item for Cadmium (Cd) as the primary variable, the secondary variables are Nickel (Ni) and Zinc (Zn); 
\item for Copper (Cu) as the primary variable, the secondary variables are Lead (Pb), Nickel (Ni) and Zinc (Zn). 
\end{itemize}
Furthermore we are using the Mat\'{e}rn kernel for Cd, Ni and Cu and the squared exponential kernel for Zn and Pb. We are modeling the marginal distribution functions for Cd, Ni and Cu with a generalized extreme value distribution and for Zn and Pb a Gamma distribution is used.

We compare the \emph{mean absolute error} (MAE) for various algorithms in \cref{fig:jura_comp} and show the comparison between the full multi-task copula process and the transductive approximation in \cref{fig:jura_cmp_trans_full}. The number of test inputs did not have a significant influence (less than $1\%$) on the prediction results and therefore the numbers were omitted.

We also included results from \citep{Alvarez:2008ve,Alvarez:2011fj} showing the approximation for convolved multi-task Gaussian processes. Note that the convolved Gaussian processes approximation uses less inducing inputs and is therefore not expected to be as good as our transductive approach. Furthermore we did not run the Gaussian process approximations on the same machine and therefore the runtime (in seconds) is not directly comparable, but were included for completeness.

\subsection{The Concrete Slump Dataset}

\begin{table}
	\centering
	\begin{tabularx}{1\columnwidth}{l l l}
		\hline
		 			& 	MtGCP 	&	TransGCP \\ \hline
		RMSE		&	$5.65\pm 2.15$	&		$5.47\pm 1.89$ \\
		MAE			&	$4.08\pm 1.47$	&		$3.97\pm 1.29$ \\
		Opt. Time	&	$386 \pm 150$ s	&		$320\pm 105$	s \\
		Time/Eval	&	$0.18 \pm 0.09$ s&		$0.09\pm 0.06$	s \\

	\end{tabularx}
	\caption{The table shows the comparison between the full multi-task copula process (MtGCP) and the transductive approximation (TransGCP) for the slump dataset. The rows are the \emph{root mean squared error} (RMSE), \emph{mean absolute error} (MAE), total time needed for the marginal likelihood optimization (Opt. Time) and the last row shows the time needed per marginal likelihood function evaluation (Time/Eval). All results are averaged over 100 trials $\pm$ standard deviation.}
	\label{fig:slump_cmp_trans_full}
\end{table}

Our last experiment is performed on the concrete slump dataset with 103 data points, 7 input variables (cement, slag, fly ash, water, SP, Coarse Aggr. and Fine Aggr.) and 3 output variables (slump (cm), flow (cm)	and 28-day compressive strength (mpa)). The goal is to estimate the variable slump with flow and compressive strength as secondary variables. We split the dataset randomly into 83 training and 20 evaluation points and calculate the average over 100 runs (see \cref{fig:slump_cmp_trans_full}). We found that the combination of Mat\'{e}rn kernel and the generalized extreme value distribution yielded the best results for all three variables.
Interestingly the predictions of the transductive approximation are (on average) better than the predictions of the full model. This can happen since the optimization problem for the transductive Gaussian copula process is slightly easier than the one from the full model and the optimizer can sometimes find a better solution. As for the Jura dataset we did not found any significant changes in predictions if we vary the number of query points.

\section{Conclusions}
\label{sec:conclusions}
This work introduced a new transductive approximation methodology for multi-task learning to solve the computational challenges in copula processes. Copula processes are extremely useful in cases where the assumptions of Gaussian processes are invalid. They allow a different marginal distribution for each query variable while capturing the inter-task dependencies. We showed how the Gaussian copula process framework could be extended to multi-task learning with appropriate kernels and addressing computational challenges. We derived closed-form expressions for the transductive approximation which reduces the computational costs from $\mathcal{O}((t\bar{n})^3)$ to $\mathcal{O}(t\bar{n}^3)$. Furthermore we investigate experimentally, on one synthetic, and two real public datasets, the different properties of the transductive learning approach.

\subsection{Acknowledgments.}
This work was supported by the Rio Tinto Centre for Mine Automation and the Australian Centre for Field Robotics.

\bibliography{literature}

\begin{thebibliography}{10}

\bibitem{Alvarez:2008ve}
M.~A. Alvarez and N.~D. Lawrence.
\newblock Sparse convolved gaussian processes for multi-output regression.
\newblock In {\em NIPS}, pages 57--64, 2008.

\bibitem{Alvarez:2011fj}
M.~A. Alvarez and N.~D. Lawrence.
\newblock Computationally efficient convolved multiple output gaussian
  processes.
\newblock {\em Journal of Machine Learning Research}, 12:1425--1466, 2011.

\bibitem{Bivand:2008ly}
R.~S. Bivand, E.~J. Pebesma, and V.~G{\'o}mez-Rubio.
\newblock {\em Applied Spatial Data Analysis with R}.
\newblock Springer, 2008.

\bibitem{Boyle:2005ly}
P.~Boyle and M.~Frean.
\newblock Dependent gaussian processes.
\newblock In {\em Advances in neural information processing systems 17:
  proceedings of the 2004 conference}, volume~17, page 217. The MIT Press,
  2005.

\bibitem{Boyle:2005rt}
P.~Boyle and M.~Frean.
\newblock Multiple output gaussian process regression.
\newblock Technical report, Victoria University Of Wellington, 2005.

\bibitem{Candela:2005fk}
J.~Q. Candela and C.~E. Rasmussen.
\newblock A unifying view of sparse approximate gaussian process regression.
\newblock {\em Journal of Machine Learning Research}, 2005.

\bibitem{Higdon:1998fk}
D.~Higdon.
\newblock A process-convolution approach to modelling temperatures in the north
  atlantic ocean.
\newblock {\em Environmental and Ecological Statistics}, 5(2):173--190, 1998.

\bibitem{Jaimungal:2009mi}
S.~Jaimungal and E.~K.~H. Ng.
\newblock Kernel-based copula processes.
\newblock {\em Machine Learning and Knowledge Discovery in Databases}, pages
  628--643, 2009.

\bibitem{Kazianka:2010rr}
H.~Kazianka and J.~Pilz.
\newblock Copula-based geostatistical modeling of continuous and discrete data
  including covariates.
\newblock {\em Stochastic Environmental Research and Risk Assessment},
  24(5):661--673, 2010.

\bibitem{Kolev:2006ij}
N.~Kolev, U.~dos Anjos, and B.V.M. Mendes.
\newblock Copulas: A review and recent developments.
\newblock {\em Stochastic models}, 22(4):617--660, 2006.

\bibitem{Melkumyan:2011pd}
A.~Melkumyan and F.~Ramos.
\newblock Multi-kernel gaussian processes.
\newblock In {\em International Joint Conference on Artificial Intelligence},
  pages 1408--1413, 2011.

\bibitem{Minka:1997vn}
T.P. Minka and R.W. Picard.
\newblock Learning how to learn is learning with point sets, 1997.

\bibitem{Nelsen:1998qy}
R.~B. Nelsen.
\newblock {\em An Introduction to Copulas (Lecture Notes in Statistics)}.
\newblock Springer, 1 edition, October 1998.

\bibitem{Rakitsch:2013}
B.~Rakitsch, C.~Lippert, K.~M. Borgwardt, and O.~Stegle.
\newblock It is all in the noise: Efficient multi-task gaussian process
  inference with structured residuals.
\newblock In {\em NIPS}, pages 1466--1474, 2013.

\bibitem{Rasmussen:2006bj}
C.~E. Rasmussen and C.~Williams.
\newblock {\em Gaussian Processes for Machine Learning}.
\newblock MIT Press, 2006.

\bibitem{Seeger:2003kx}
M.~Seeger, C.~K.~I. Williams, and N.~Lawrence.
\newblock Fast forward selection to speed up sparse gaussian process
  regression.
\newblock In {\em Ninth International Workshop on Artificial Intelligence and
  Statistics}. Society for Artificial Intelligence and Statistics, 2003.

\bibitem{Sklar:1996ys}
A.~Sklar.
\newblock Random variables, distribution functions, and copulas: A personal
  look backward and forward.
\newblock {\em Lecture Notes-Monograph Series}, pages 1--14, 1996.

\bibitem{Snelson:2006fk}
E.~Snelson and Z.~Ghahramani.
\newblock Sparse gaussian processes using pseudo-inputs.
\newblock In {\em Advances in Neural Information Processing Systems}. The MIT
  Press, 2006.

\bibitem{snelson2004warped}
E.~Snelson, C.E. Rasmussen, and Z.~Ghahramani.
\newblock Warped gaussian processes.
\newblock In {\em Advances in neural information processing systems 16:
  proceedings of the 2003 conference}, volume~16, page 337. The MIT Press,
  2004.

\bibitem{Tresp:2000ba}
V.~Tresp.
\newblock A bayesian committee machine.
\newblock {\em Neural Computation}, 12(11):2719--2741, 2000.

\bibitem{Wackernagel:2003uq}
H.~Wackernagel.
\newblock {\em Multivariate geostatistics: an introduction with applications}.
\newblock Springer-Verlag, 2nd edition, 2003.

\bibitem{Wauthier:2010qa}
F.~Wauthier and M.~Jordan.
\newblock Heavy-tailed process priors for selective shrinkage.
\newblock In J.~Lafferty, C.~K.~I. Williams, J.~Shawe-Taylor, R.S. Zemel, and
  A.~Culotta, editors, {\em Advances in Neural Information Processing Systems
  23}, pages 2406--2414. 2010.

\bibitem{Wilson:2010lq}
A.~Wilson and Z.~Ghahramani.
\newblock Copula processes.
\newblock In J.~Lafferty, C.~K.~I. Williams, J.~Shawe-Taylor, R.S. Zemel, and
  A.~Culotta, editors, {\em Advances in Neural Information Processing Systems
  23}, pages 2460--2468. 2010.

\bibitem{wilson12icml}
A.~Wilson, D.~A. Knowles, and Z.~Ghahramani.
\newblock Gaussian process regression networks.
\newblock In J.~Langford and J.~Pineau, editors, {\em Proceedings of the 29th
  International Conference on Machine Learning (ICML)}, Edinburgh, June 2012.
  Omnipress.

\bibitem{Yu:2005zr}
K.~Yu, V.~Tresp, and A.~Schwaighofer.
\newblock Learning gaussian processes from multiple tasks.
\newblock In {\em Proceedings of the 22nd international conference on Machine
  learning}, pages 1012--1019. ACM, 2005.

\end{thebibliography}

\end{document}